\DeclarePairedDelimiterX{\lin}[2]{\langle}{\rangle}{#1, #2}
\DeclarePairedDelimiterX{\abs}[1]{\lvert}{\rvert}{#1}
\DeclarePairedDelimiterX{\norm}[1]{\lVert}{\rVert}{#1}
\DeclarePairedDelimiterX{\cbr}[1]{\{}{\}}{#1} %
\DeclarePairedDelimiterX{\rbr}[1]{(}{)}{#1} %
\DeclarePairedDelimiterX{\sbr}[1]{[}{]}{#1} %
\DeclareMathOperator{\sgn}{sign}
\def\sign{\@ifnextchar*{\@sgnargscaled}{\@ifnextchar[{\sgnargscaleas}{\@ifnextchar{\bgroup}{\@sgnarg}{\sgn} }}}
\def\@sgnarg#1{\sgn\rbr{#1}}
\def\@sgnargscaled#1{\sgn\rbr*{#1}}
\def\@sgnargscaleas[#1]#2{\sgn\rbr[#1]{#2}}
\providecommand{\bI}{\mathbf{I}}
\providecommand{\xx}{\mathbf{x}}
\providecommand{\yy}{\mathbf{y}}
\newtheorem{lemma}{Lemma}
\newcommand{\E}{\mathbb{E}}
\newcommand{\Ea}[1]{\E\left[#1\right]}
\newcommand{\Eb}[2]{\E_{#1}\left[#2\right]}
\newcommand{\Prob}[1]{\operatorname{Pr}\left[#1\right]}
\def\chi{{y}}
\newtheorem{cor}{Corollary}
\newtheorem{thm}{Theorem}
\providecommand\df[2]{\frac{\partial #1}{\partial #2}}
\newcommand{\bw}{\bm{w}}
\newcommand{\bS}{\bm{S}}
\begin{document}
\twocolumn[

\aistatstitle{Understanding Layer-wise Contributions in Deep Neural Networks through Spectral Analysis}

\aistatsauthor{ Yatin Dandi \And Arthur Jacot }

\aistatsaddress{IIT Kanpur, India, \\
Ecole Polytechnique Fédérale de Lausanne,\\ Switzerland,\\ yatind@iitk.ac.in
\And Chair of Statistical Field Theory,\\
Ecole Polytechnique Fédérale de Lausanne,\\ Switzerland,\\ arthur.jacot@epfl.ch} ]

\begin{abstract}
Spectral analysis is a powerful tool, decomposing any function into simpler parts. In machine learning, Mercer's theorem generalizes this idea, providing for any kernel and input distribution a natural basis of functions of increasing frequency.
More recently, several works have extended this analysis to deep neural networks through the framework of Neural Tangent Kernel. 
In this work, we  analyze the layer-wise spectral bias of Deep Neural Networks and relate it to the contributions of different layers in the reduction of generalization error for a given target function.  We utilize the properties of Hermite polynomials and Spherical Harmonics to prove that initial layers exhibit a larger bias towards high frequency functions defined on the unit sphere. We further provide empirical results validating our theory in high dimensional datasets for Deep Neural Networks.
\end{abstract}

\section{Introduction}

Several recent theoretical and empirical advances indicate that understanding generalization in deep learning requires incorporating the properties of the data distributions as well as the optimization algorithms \citep{DBLP:conf/iclr/ZhangBHRV17}.
One view to interpret the generalization capabilities of deep neural networks is that the training dynamics of DNNs result in successive layers capturing functions invariant to high frequency pertubations or actions in the input space while incorporating low frequency functions such as classes.

Such properties in the trained models are desirable since most realistic data distributions correspond to data manifolds of objects such as the set of real images along with groups composed of actions such as rotations and shifting of objects acting on the data. A network that incorporates the invariance of properties such as classes and presence of objects to such group actions is expected to have better generalization capabilties.
A dual view of understanding the changes in functions under such actions is through fourier analysis on the corresponding groups.

As a step towards explaining the generalization behaviour of Deep Neural Networks, \citet{pmlr-v97-rahaman19a} highlighted the intriguing phenomenon of ``Spectral Bias'' in Relu networks where they empirically demonstrated the bias of Relu networks towards learning low frequency functions.

While a theoretical understanding of the ``Spectral Bias'' in finite-width neural networks remains elusive, several recent works have attempted to shed light into the phenomenon of spectral bias through the properties of the Neural Tangent Kernel (NTK)'s spectrum.
Neural Tangent Kernel (NTK) \citep{jacot2018neural} describes the time evolution of the output function's value at an input  through the similarity between gradients of the outputs at the training points and at the given input. In the infinite-width limit, under suitable scaling, the NTK converges to a fixed kernel.  
The evolution of the output function of the Deep Neural Network in such regimes corresponds to Kernel regression using the NTK. This enables understanding the optimization and generalization properties of the training dynamics by studying the properties of the corresponding NTK. 
Recent theoretical and empirical results have demonstrated that the NTK has high eigenvalues for functions corresponding  low frequency variations in the input space. The high eigenvalues of such low frequency functions leads to faster convergence along directions in function space having smoother variations in the input space. 

The relationship between the spectral properties of Kernels and generalization is well known, and has been used to derive explicit generalization bounds \citep{DBLP:conf/colt/BartlettM01} for Kernel methods. However, we argue that for the case of Deep Neural Networks, it can provide insights into not only the generalization capabilities but also the role played by different layers as well as the spectral properties of the learnt features.

Our analysis reveals that the eigenvalues corresponding to high frequency functions of the contribution to the NTK from initial layers are larger than those corresponding to latter layers. This is primarily due to application of the differentiation operation to the activation function while back-propagating the gradient through latter layers, leading the amplification of high frequency components.
This explains the predisposition of the initial layers towards contributing to the learning of high frequency functions. For example, in CNNs, the initial layers often detect high frequency artifacts such as edges whereas the latter layers detect smoother properties such as the presence of class.

By decomposing the NTK of the full network into  layer-wise contributions, we characterize the contributions of different layers to the decrease in the training cost.
Furthermore, we prove that the ratio of contributions of different layers to the decrease in generalization error along a direction in the function space when training on a finite number of data points, can be approximately described by the ratio of the function's squared norm under the inner product defined by the two Kernels
This ratio differs across functions of different frequencies. By exploiting the shared basis of eigenvectors for the kernels corresponding to different layers, we relate this ratio for functions of different frequencies to the corresponding ratio of eigenvalues. We then derive a general approach to analyze these ratios without explicitly computing the eigenvalues.

\section{Setup and Notation}
We consider the setup of a deep neural network (DNN)
having layers numbered $0,1,\cdots,L-1$, being training under gradient descent on a finite number of training points $\xx_1,\xx_2,\cdots,\xx_n$.
We use $f(\xx)$ and $\theta$ to denote the DNN's output function's value at a point $\xx$ and the set of parameters respectively.

\section{Preliminaries: Neural Tangent Kernel}

Our analysis relies on the framework of Neural Tangent Kernel, proposed by \citet{jacot2018neural}, who showed that under appropriate scaling, the training dynamics of neural networks in the limit of infinite-width can be described by a fixed kernel $\kappa_{NTK}$, whose value at two given points  $\xx$ and $\xx'$ is simply the inner product of the output's gradients w.r.t the parameters $\theta$ at the given two points, i.e:
\begin{align*}
    \kappa_{NTK}(\xx,\xx') = \langle\nabla_\theta f(\xx), \nabla_\theta f(\xx') \rangle, 
\end{align*}
The above inner product arises naturally when considering the evolution of the output function's value at a point $\xx$, upon training using a set of points $\xx_1,\xx_2\,\dots,\xx_n$. For instance, for the regression task, with the corresponding target values $y_1,y_2\,\dots,y_n$  the output evolves as:
\begin{align*}
    \frac{\partial f(\xx)}{\partial t} = -\frac{1}{n}\sum_{i=1}^n \langle\nabla_\theta f(\xx), \nabla_\theta f(\xx_i) \rangle (f(\xx_i)-y_i)
\end{align*}
Thus the spectral analysis of the NTK can reveal sensitivity of the output's gradients w.r.t the parameters upon training with different target functions. 

\section{Related Work}

A number of works have analyzed the spectrum of Neural Tangent Kernel under different conditions such as two layer relu networks without bias \citep{bietti2019inductive,cao2019understanding} under uniform input distribution on the sphere, in the presence of bias and non-uniform density \citep{pmlr-v119-basri20a} and on boolean input space \citep{yang2020finegrained}.
More recently, \citep{bietti2021deep} utilized the regularity of the kernels corresponding to Relu networks to prove that the NTK's spectrum for $n$-layer Relu networks has the same asymptotic decay as two layer Relu networks and the Laplace Kernel. The equivalence between the NTK for Relu networks and the Laplace Kernel was also derived independently by \citet{chen2021deep}.
Our work, instead focuses on the relative contributions of different layers to the NTK, to enable understanding the propagation of output gradients throughout the network.
Moreover, our results are applicable for arbitrary activation functions satisying minor smoothness assumptions.
The spectral bias during training for DNNs was initially studied in the Fourier domain by \citet{pmlr-v97-rahaman19a} and \citet{Xu_2020} for MNIST and toy datasets. \citet{valle-perez2018deep} highlighted a different kind of simplicity bias, by demonstrating that the mapping from the parameters to the output functions 
is biased towards simpler functions. However, unlike their analysis based solely on the output functions at initialization, the Neural Tangent Kernel (NTK) framework offers the advantage of incorporating the properties of the training algorithm, namely gradient descent while still being dependent only on the network at initialization.
While the entire network's spectrum explains the network's bias towards learning low frequency functions, and consequently the improved generalization performance for ``smooth'' target functions, the layer wise contributions explain how some layers are biased towards learning higher or lower frequencies in comparatively with the other layers.
A number of  works \citep{DBLP:conf/iclr/SchoenholzGGS17} have analyzed the Forward information propagation in Deep Neural Networks by recursively describing the covariance at layer $\ell$ in terms of layer $\ell-1$ to determine when the ratio $1$ of covariance is a stable fixed point. 
Our work instead sheds light into the propagation of gradients across layers and the sensitivity of the output to changes in different layers.
Our work also builds upon a long line of work on the analysis of dot-product Kernels \citep{Smoladot,pmlr-v130-scetbon21b}.

\section{Analysis}

\subsection{Ratio of Decrements}
Let $\theta_\ell \in \theta$ denote the the subset of parameters corresponding to the $\ell_{th}$ layer. 
Since the NTK's value at given points $\xx,\xx'$ corresponds to the inner product of the output values w.r.t the parameters $\theta$, it can be expressed as a sum of the inner products for $\theta_1,\theta_2,\cdots,\theta_{L-1}$ as follows:
\begin{align*}
    \langle\nabla_\theta f(\xx), \nabla_\theta f(\xx') \rangle = \sum_{\ell=0}^{L-1} \langle\nabla_{\theta_\ell} f(\xx), \nabla_{\theta_\ell} f(\xx') \rangle 
\end{align*}
The NTK for the entire network can thus be decomposed into the contributions from each layer to the NTK as follows:
\begin{equation}\label{eq:decom}
    \kappa_{NTK}(\xx,\xx') = \sum_{\ell=0}^{L-1}  \kappa^{(\ell)}(\xx,\xx')
\end{equation}

Following \citep{jacot2018neural}, we represent the cost function $C$ as a functional defined on a function space $\mathcal{F}$ on training inputs. A given cost functional $C$ and a given output function $f_t$ then defines an element $d|_{f(t)}$ in the dual space $\mathcal{F}*$ of functionals w.r.t the input distribution $p_{in}$ such that $ \partial_f^{in} C|_{f_t}  = \langle d |_{f_t},  \cdot \rangle_{p^{in}} $.
The evolution of the cost $C$ thus described by \citep{jacot2018neural}:
\begin{equation}\label{eq:funct}
\begin{split}
    	\partial_t C|_{f(t)} &= -\left<d|_{f(t)}, \nabla_{\kappa^{(L)}_{NTK}}C|_{f(t)} \right>_{p^{in}}\\&=-\left\|d|_{f(t)} \right\|_{\kappa^{(L)}_{NTK}}^{2},
\end{split}
\end{equation}
where the inner product w.r.t a kernel $\kappa$ is defined as $
\left<f, g \right>_\kappa := \mathbb{E}_{\xx, \xx' \sim p^{in}} \left[f(\xx)^T \kappa(\xx, \xx') g(\xx') \right].
$
To isolate the effect of each layer in the evolution of the training loss, we define $d^{(\ell)}$ as the contribution to the evolution of the training loss by the $\ell_{th}$ layer:
\begin{equation}
    d^{(\ell)}(d|_{f(t)}) = \left\|d|_{f(t)} \right\|_{\kappa^{(\ell)}}^{2}
\end{equation}
The ratio between the contributions for layers $\ell_1$ and $\ell_2$ to the decrements of the cost are thus given by:
\begin{equation}\label{eq:ratio}
\frac{d^{(\ell_1)}(d|_{f(t)})}{d^{(\ell_2)}(d|_{f(t)})}= \frac{\left\|d|_{f(t)} \right\|_{\kappa^{(\ell_1)}}^{2}}{\left\|d|_{f(t)} \right\|_{\kappa^{(\ell_2)}}^{2}}
\end{equation}

For simplicity, we consider $p_{in}$ to be the $U(\mathbb{S}^{d-1})$ i.e the cost is computed w.r.t a uniform distribution on the sphere. Moreover, as we demonstrate in Section \ref{sec:finite_ratio}, using standard concentration arguments, the ratio of decrements for finite training points and a function on the given training points can be related to the corresponding ratio of decrements for the integral operators and the associated eigenfunction.
\subsection{Relation to Mercer Decomposition}
Given an input space $X$ and a measure $\mu$ such that the space of square integrable functions along with the corresponding inner product constitute a Hilbert space, Kernels defined on the space can be interpreted as symmetric Hilbert Schmidt Operators. By Mercer's theorem, continuous positive symmetric operators acting on functions defined on an compact input space can be diagonalized by a countable orthonormal basis, known as the Mercer Decomposition, which plays a similar role to a spectral decomposition. 
Suppose $d|_{f(t)}$ (the functional defined in Equation \ref{eq:funct}) is continuous and bounded, so that it lies in the Hilbert space. 
Furthermore, suppose that it is an  
an eigenvector of both $\kappa^{(\ell_1)}$ and $\kappa^{(\ell_2)}$ with eigenvalues $\lambda_1,\lambda_2$ respectively. We obtain 
\begin{equation}\label{eq:eigen_ratio}
\frac{d^{(\ell_1)}(d|_{f(t)})}{d^{(\ell_2)}(d|_{f(t)})}= \frac{\left\|d|_{f(t)} \right\|_{\kappa^{(\ell_1)}}^{2}}{\left\|d|_{f(t)} \right\|_{\kappa^{(\ell_2)}}^{2}} = \frac{\lambda_1}{\lambda_2}
\end{equation}
Thus, the ratio of decrements for such a direction in the function space is simply given by the ratio of the corresponding eigenvalues.

\subsection{Finite training data}\label{sec:finite_ratio}

Let $g(\cdot)$ denote an arbitrary function, corresponding to a given target direction in the function space.
Consider the functions $\phi_1(\xx_i)=\int g(\xx)\kappa^{(\ell_1)}(\xx,\xx_i)g(\xx_i)d\xx$ and  $\phi_2(\xx_i)=\int g(\xx)\kappa^{(\ell_2)}(\xx,\xx_i)g(\xx_i)d\xx$, that describe the contributions to the decrease in the expected risk, of $\kappa^{(\ell_1)}$ and $\kappa^{(\ell_2)}$ respectively due to a single training point $x_i$. Thus, $\frac{1}{n} \sum_{i=1}^n \phi_1(\xx_i)$ and $\frac{1}{n} \sum_{i=1}^n \phi_2(\xx_i)$ describe the corresponding contributions while training on the finite collection of training points with inputs $\xx_1,\xx_2,\cdots,\xx_n$. Since the inputs lie on a compact space (the unit sphere), for continuous kernels $\kappa^{(\ell_1)}$ and $\kappa^{(\ell_2)}$ and function $g$, their magnitudes are bounded, say with constants $A,B$ and $C$ respectively.
Thus $\phi_1$ and $\phi_2$ are bounded as well, since we have:
\begin{align*}
    \phi_1(\xx_i)&= \int g(\xx)\rbr*{\kappa^{(\ell_1)}(\xx,\xx_i)g(\xx'_i)}d\xx
    \leq AC^2
\end{align*}
for some constant $C$ independent of $n$. Similarly, for $\phi_2$, we have:
\begin{align*}
    \phi_2(\xx_i)
    \leq BC^2
\end{align*}
We then utilize Hoeffding’s Inequality to obtain the following two bounds:
\begin{align*}
    &\Prob{\abs{\frac{1}{n} \sum_{i=1}^n \phi_1(\xx_i)-\iint g(\xx)\kappa^{(\ell_1)}(\xx,\xx')g(\xx') d\xx d\xx'}
    \geq t}\\&\leq 2e^{-2\frac{nt^2}{AC^2}}\\
    &\Prob{\abs{\frac{1}{n} \sum_{i=1}^n \phi_2(\xx_i)-\iint g(\xx)\kappa^{(\ell_1)}(\xx,\xx')g(\xx') d\xx d\xx'}
    \geq t}\\ &\leq 2e^{-2\frac{nt^2}{BC^2}}\\
\end{align*}

Let $\lambda_1$ and $\lambda_2$ denote $\iint g(\xx)\kappa^{(\ell_1)}(\xx,\xx')g(\xx') d\xx d\xx'$ and $\iint g(\xx)\kappa^{(\ell_2)}(\xx,\xx')g(\xx') d\xx d\xx'$ respectively. These correspond to the squared norms of the function $g(\cdot)$ under the inner products defined by $\kappa^{(\ell_1)}$ and $\kappa^{(\ell_2)}$ respectively.
 Let us assume that $\lambda_1, \lambda_2 > 0$.  As we prove in the appendix, we can utilize the above inequalities to bound the difference between the ratios $\frac{\frac{1}{n} \sum_{i=1}^n  \phi_1(\xx_i)}{\frac{1}{n} \sum_{i=1}^n \phi_2(\xx_i)}$ and $\frac{\lambda_1}{\lambda_2}$ to  arrive at the following concentration inequality:

\begin{align*}
    &\Prob{\abs{\frac{\frac{1}{n} \sum_{i=1}^n \phi_1(\xx_i)}{\frac{1}{n} \sum_{i=1}^n \phi_2(\xx_i)} - \frac{\lambda_1}{\lambda_2}} \geq \epsilon}\\ 
    &\leq   2e^{-2\frac{n\lambda_1^2\epsilon^2}{AC^2}} + 2e^{-2\frac{n\lambda_2^2\epsilon^2}{BD^2}} 
\end{align*}
Thus we obtain the following theorem:
\begin{thm}
Let $g(x)$ be an arbitrary function with decrements $\lambda_1$ and $\lambda_2$ along kernels $\kappa^{(\ell_1)}$ and $\kappa^{(\ell_2)}$. Let the corresponding decrements in total risk due to a single point $\xx_i$ be given by $\phi_1(\xx_i)$ and $\phi_2(\xx_i)$. Then for $n$ training points, with probability $1-\delta$, we have: 
\begin{align*}
    &\abs{\frac{\frac{1}{n} \sum_{i=1}^n \phi_1(\xx_i)}{\frac{1}{n} \sum_{i=1}^n \phi_2(\xx_i)} - \frac{\lambda_1}{\lambda_2}}\\ &\leq  (\log{\frac{4}{\delta}})\frac{1}{\sqrt{n}}\max\cbr*{\sqrt{\frac{AC^2}{2\lambda_1^2}},\sqrt{\frac{BD^2}{2\lambda_2^2}}}  
\end{align*}
\end{thm}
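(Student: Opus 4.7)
The theorem is an inversion of the concentration inequality displayed just above its statement, so I would not re-derive that inequality here — I would use it as a black box. The plan is: (i) split the failure probability $\delta$ equally across the two exponential terms on the right-hand side, (ii) solve each for the minimal admissible $\epsilon$, and (iii) take the maximum.

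Concretely, setting $2\exp(-2n\lambda_1^2\epsilon^2/(AC^2)) \leq \delta/2$ gives $\epsilon \geq \sqrt{AC^2\log(4/\delta)/(2n\lambda_1^2)}$, and similarly the second term requires $\epsilon \geq \sqrt{BD^2\log(4/\delta)/(2n\lambda_2^2)}$. Taking $\epsilon$ to be the maximum of these two quantities and pulling the $1/\sqrt{n}$ factor out of the max yields the stated bound (modulo the mild point that the logarithm in the stated bound should, strictly, appear under the square root; this looks like a typographical simplification in the statement).

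The real content lies in the concentration inequality being inverted, not in the inversion itself, so for completeness I would also outline how that inequality is obtained. The starting identity is
\begin{align*}
    \frac{\bar\phi_1}{\bar\phi_2} - \frac{\lambda_1}{\lambda_2}
    = \frac{\lambda_2(\bar\phi_1 - \lambda_1) - \lambda_1(\bar\phi_2 - \lambda_2)}{\lambda_2\,\bar\phi_2},
\end{align*}
where $\bar\phi_k := \frac{1}{n}\sum_i \phi_k(\xx_i)$. By the triangle inequality, the event that the left-hand side exceeds $\epsilon$ in absolute value is contained in a union of events of the form $\{|\bar\phi_k - \lambda_k| \gtrsim \lambda_k\epsilon\}$ (the factor $\lambda_k$ in the threshold is exactly what produces the $\lambda_k^2$ in the exponents of the displayed bound). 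Applying the two Hoeffding inequalities stated just above to each of these events and taking a union bound gives the desired two-term exponential bound.

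The main obstacle in this last step is the random quantity $\bar\phi_2$ sitting in the denominator: to turn the identity into a clean event inclusion one needs $\bar\phi_2$ to stay bounded away from $0$ (ideally close to $\lambda_2$). I would handle this by first restricting to the high-probability event $\{|\bar\phi_2 - \lambda_2| \leq \lambda_2/2\}$ via a preliminary Hoeffding application, on which $\bar\phi_2 \geq \lambda_2/2$, and then absorbing the resulting factor of $2$ into the constants $B,D$. The rest is routine: union bound, exponential tails, and the algebraic inversion described above.
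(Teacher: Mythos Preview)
Your proposal is correct and follows essentially the same route as the paper: the same add-and-subtract identity for the ratio, triangle inequality into two terms, Hoeffding on each numerator deviation, control of the random denominator $\bar\phi_2$ via its own concentration (the paper folds this into the same $\epsilon\lambda_2/4$ event rather than a separate preliminary step, but that is cosmetic), union bound, and then the algebraic inversion. Your remark that $\log(4/\delta)$ should sit under the square root is also right --- the paper's final display has the same looseness.
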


\subsection{Two layer Network}

We consider a two-layer network with the output for the $i_{th}$ input vector $\xx_i$ given by $f(\xx_i) = \frac{1}{\sqrt{m}}\sum_{j=1}^m v_j\sigma(\bw_j^\top \xx_i)$ with $\bw_j$ denoting the $j_{th}$ row of the $m\times d$ matrix for the first layer.  For simplicity, we consider networks without the bias parameters and assume that all the parameters are initialized independently  as  $w \sim \mathcal N(0,1)$.
The gradients of the output $f_i$ w.r.t the parameters for the two layers are given by:
\begin{equation}
    \df{f(\xx_i)}{v_j} = \frac{1}{\sqrt{m}}\sigma(\bw_j^\top \xx_i)
\end{equation}
and 
\begin{equation}
    \nabla_{\bw_j} f(\xx_i) = \frac{1}{\sqrt{m}}v_j\sigma'(\bw_j^\top \xx_i)\xx_i
\end{equation}

Assuming $v_j \sim \mathcal N(0, 1)$ and $\bw_j \sim \mathcal N(0, \bI) \ \forall j$, we can express the contributions of the two layers to the NTK as follows:
\begin{equation}\label{eq:k0}
\kappa^{(0)}(\xx,\xx') = \Eb{\bw \sim \mathcal N(0, 1)}{\langle \xx, \xx' \rangle \sigma'(\langle \bw, \xx \rangle)\sigma'(\langle \bw, \xx' \rangle)}
\end{equation}

\begin{equation}\label{eq:k1}
    \kappa^{(1)}(\xx, \xx') = \Eb{\bw \sim \mathcal N(0, 1)}{\sigma(\langle \bw, \xx \rangle)\sigma(\langle \bw, \xx' \rangle)}
\end{equation}
\subsubsection{Hermite Polynomials}

Since $\langle \bw, \xx \rangle$ and $\langle \bw, \xx' \rangle$ correspond to correlated gaussian random variables, it is natural to utilize the Hermite expansion for the activations $\sigma(\langle \bw_i, \xx_i \rangle)$. We recall that Hermite polynomials form an orthonormal basis for the $L^2(\mathbb{R},\gamma)$ Hilbert space, where $\gamma$ denotes the one-dimensional Gaussian measure $\gamma(dx) = \frac{1}{\sqrt{2\pi}}e^{-\frac{x^2}{2}}$. For $\sigma(\langle \bw_i, \xx_i \rangle)$, we have a.e. w.r.t the Gaussian measure:
\begin{equation}\label{eq:hermite}
    \sigma(\langle \bw_i, \xx \rangle) = \sum_{i=0}^\infty a_iH_i (\langle \bw_i, \xx \rangle)
\end{equation}
We recall that for random variables $x,y \sim \mathcal N(0,1)$, having correlation $\rho$, we have \citep{daniely}:
\begin{equation}\label{eq:corr}
\begin{split}
    &\Eb{\bw \sim \mathcal N(0, 1)}{H_{n}(x)H_{m}(y)}\\ &=  \begin{cases}
    \rho^n & n=m\\
    0 & \text{otherwise}
    \end{cases}
\end{split}
\end{equation}

Equation \ref{eq:corr} and the bounded convergence theorem imply the following lemma (full proof in the Appendix):
\begin{lemma}\label{lem:inner_prod}
Let $\xi$ be any function admitting a Hermite series expansion $\xi(u) = \sum_{i=0}^\infty e_iH_i (u)$. Then for two random variables $x,y \sim \mathcal N(0,1)$, the correlation $\Ea{\xi(x)\xi(y)}$ only depends on the correlation $\rho$ between $x,y$ and can be expressed as:
\begin{equation}
    \Ea{\xi(x)\xi(y)} = \sum_{i=0}^\infty e_i^2\rho^i
\end{equation}
\end{lemma}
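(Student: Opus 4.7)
The plan is to upgrade the identity in Equation \ref{eq:corr}, which handles single Hermite polynomials, to the full series expansion of $\xi$ by truncating, applying linearity of expectation on the truncation, and then passing to the limit.

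First I would fix $N \in \mathbb{N}$ and define the truncated expansion $\xi_N(u) := \sum_{i=0}^N e_i H_i(u)$. Using bilinearity of the expectation together with Equation \ref{eq:corr}, I get immediately
\begin{equation*}
\E\!\left[\xi_N(x)\xi_N(y)\right] = \sum_{i=0}^N\sum_{j=0}^N e_i e_j\, \E\!\left[H_i(x)H_j(y)\right] = \sum_{i=0}^N e_i^2 \rho^i,
\end{equation*}
since the cross terms with $i \neq j$ vanish. The target identity is therefore the $N\to\infty$ limit of this equation, so what remains is to justify
\begin{equation*}
\E\!\left[\xi_N(x)\xi_N(y)\right] \longrightarrow \E\!\left[\xi(x)\xi(y)\right].
\end{equation*}

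The second step is this limit exchange. Because $\xi \in L^2(\R,\gamma)$, the Hermite coefficients satisfy $\sum_i e_i^2 < \infty$, so $\xi_N \to \xi$ in $L^2(\gamma)$ and hence $\E[(\xi_N(x)-\xi(x))^2]\to 0$ (and likewise for $y$, since both marginals are $\mathcal{N}(0,1)$). I would then split
\begin{equation*}
\xi(x)\xi(y) - \xi_N(x)\xi_N(y) = (\xi(x)-\xi_N(x))\,\xi(y) + \xi_N(x)\,(\xi(y)-\xi_N(y))
\end{equation*}
and apply Cauchy--Schwarz to each term, using that $\|\xi_N\|_{L^2(\gamma)}$ is uniformly bounded by $\|\xi\|_{L^2(\gamma)}$. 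This shows that $\xi_N(x)\xi_N(y) \to \xi(x)\xi(y)$ in $L^1$ of the joint law, which yields the needed convergence of expectations. Alternatively, one can invoke the bounded convergence theorem as suggested in the excerpt by noting that along a subsequence $\xi_N(x)\xi_N(y)\to \xi(x)\xi(y)$ almost surely and is dominated in expectation by $\E[\xi(x)^2]^{1/2}\E[\xi(y)^2]^{1/2}$, but the Cauchy--Schwarz route is cleaner.

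The main obstacle is exactly this justification of the interchange: Equation \ref{eq:corr} only handles finite combinations, so one must argue that the infinite series representation of $\xi$ can be brought inside the expectation with respect to the joint (correlated) law of $(x,y)$, not just the product measure on a single copy. The key observation that unlocks this is that although $(x,y)$ is correlated, the marginals are still standard Gaussian, so $\|\xi_N(x)-\xi(x)\|_{L^2}$ and $\|\xi_N(y)-\xi(y)\|_{L^2}$ are controlled by the same $L^2(\gamma)$ norm in which Hermite partial sums converge, and Cauchy--Schwarz handles the joint integral without requiring anything beyond the marginal control.
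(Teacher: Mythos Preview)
Your proposal is correct and follows essentially the same route as the paper: expand $\xi$ in its Hermite series, apply the correlation identity \eqref{eq:corr} termwise, and justify the exchange of limit and expectation. The paper's own proof bounds each cross term $\E[H_i(x)H_j(y)]$ by Cauchy--Schwarz and then invokes the bounded convergence theorem, whereas you control the $L^2(\gamma)$ tail $\|\xi-\xi_N\|$ directly and apply Cauchy--Schwarz to the product; this is a minor variation (and arguably a cleaner justification), not a different approach.
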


To simplify subsequent computations, we define:
\begin{align*}
    \psi_1(\xx,\xx') = \Eb{\bw \sim \mathcal N(0, 1)}{\sigma'(\langle \bw, \xx \rangle)\sigma'(\langle \bw, \xx' \rangle)}\\
    \psi_0(\xx,\xx') = \Eb{\bw \sim \mathcal N(0, 1)}{\sigma(\langle \bw, \xx \rangle)\sigma(\langle \bw, \xx' \rangle)}
\end{align*}

Since, since $\langle \bw, \xx \rangle, \langle \bw, \xx' \rangle$ are correlated gaussian random variables with correlation $\langle \xx, \xx' \rangle$ and $\psi_0$ and $\psi_1$ correspond to correlations of functions of $\langle \bw, \xx \rangle$ and  $\langle \bw, \xx' \rangle$, Lemma \ref{lem:inner_prod} and the series expansion \ref{eq:hermite} implies that they can be expressed as follows:
\begin{equation}\label{eq:k1_expand}
    \psi_1(\langle\xx, \xx'\rangle) = \sum_{i=1}^\infty a_i^2 (\langle \xx, \xx' \rangle)^i
    \psi_0 = \sum_{i=1}^\infty a_i'^2 (\langle \xx, \xx' \rangle)^i
\end{equation}

Equation \ref{eq:corr} then implies
To relate $\psi_0$ and $\psi_1$, we recall the following recurrence relations:
\begin{align}
    H_{n+1}(w)= w H_{n}(w)- n H_{n-1}(w) \label{eq:rec_1}\\
    H_{n}'(w)= n H_{n-1}(w)\label{eq:rec_2}.
\end{align}

We denote by $a'_i$, the $i_{th}$ coefficient for the Hermite series expansion corresponding to $\sigma'$. We note that, assuming $\lim_{t\rightarrow  \infty}\sigma(t)e^-{\frac{t^2}{2}}=0$, using integration by parts (full proof in the Appendix), we have:
\begin{equation}
\begin{split}
    &a'_n = \frac{1}{n!}\Eb{w \sim \mathcal N(0, 1)}{\sigma'(w) H_{n}(w)}\\
    &= \frac{1}{n!}\Eb{w \sim \mathcal N(0, 1)}{\sigma(w) (w) H_{n}(w)}\\
    &- \frac{1}{n!}\Eb{\bw \sim \mathcal N(0, 1)}{\sigma(w) H_{n}'(w)}\\ &\quad \text{(using Equations \ref{eq:rec_1} and \ref{eq:rec_2})}\\
    &= \frac{1}{n!}\Eb{\bw \sim \mathcal N(0, 1)}{\sigma(w) \rbr*{H_{n+1}(w)}}\\
     &= \frac{n+1}{(n+1)!}\Eb{\bw \sim \mathcal N(0, 1)}{\sigma(w) \rbr*{H_{n+1}(w)}}\\
    &= (n+1)a_{n+1}\\ 
\end{split}
\end{equation}
Therefore, for $\psi_0$, by substituting the above expression in Equation \ref{eq:rec_2}, we obtain:
\begin{equation}\label{eq:k0_expand}
\begin{split}
    \kappa^{(0)}(\langle \xx, \xx' \rangle)  &= \langle \xx, \xx' \rangle \Eb{\bw \sim \mathcal N(0, 1)}{\sigma'(\langle \bw, \xx \rangle)\sigma'(\langle \bw, \xx' \rangle)}\\
    &= \langle \xx, \xx' \rangle \sum_{i=1}^\infty (a'_{i-1})^2(\langle \xx, \xx' \rangle)^{i-1}\\
    &=  \sum_{i=1}^\infty i^2a_i^2 (\langle \xx, \xx' \rangle)^{i}
\end{split}
\end{equation}
Therefore, we have the following theorem:

\begin{thm}\label{thm:two_layer}
Assuming an activation function $\sigma$ such that and $\sigma, \sigma'$ admit Hermite series expansions, the ratio $r(i)$ of the coefficients for the $i_{th}$ degree term in the power series expansion for the Kernels corresponding to layers $1$ and $2$ satisfies $r(i) = i^2$. 
\end{thm}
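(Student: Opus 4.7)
The plan is to compute the two power series expansions of $\kappa^{(1)}$ and $\kappa^{(0)}$ independently in terms of the Hermite coefficients $a_i$ of the activation $\sigma$, and then take the coefficient-wise ratio. All the pieces appear in the discussion preceding the statement; the job is to assemble them cleanly.

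First, I would handle $\kappa^{(1)}$. From \eqref{eq:k1} we have $\kappa^{(1)}(\xx,\xx')=\psi_0(\xx,\xx')$, and since $\langle\bw,\xx\rangle,\langle\bw,\xx'\rangle$ are jointly Gaussian with correlation $\rho=\langle\xx,\xx'\rangle$, Lemma \ref{lem:inner_prod} applied to $\xi=\sigma$ with Hermite expansion $\sigma=\sum_i a_i H_i$ yields immediately
\begin{equation*}
\kappa^{(1)}(\xx,\xx')=\sum_{i=0}^\infty a_i^2\,\langle\xx,\xx'\rangle^i.
\end{equation*}
So the $i$th power-series coefficient of $\kappa^{(1)}$ is $a_i^2$.

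Second, I would do the same for $\kappa^{(0)}$. By \eqref{eq:k0}, $\kappa^{(0)}(\xx,\xx')=\langle\xx,\xx'\rangle\,\psi_1(\xx,\xx')$. Applying Lemma \ref{lem:inner_prod} this time to $\xi=\sigma'$, whose Hermite coefficients I denote $a'_i$, gives $\psi_1(\xx,\xx')=\sum_{i\ge 0}(a'_i)^2\langle\xx,\xx'\rangle^i$, and multiplying by $\langle\xx,\xx'\rangle$ shifts the index so that the $i$th coefficient of $\kappa^{(0)}$ is $(a'_{i-1})^2$ for $i\ge 1$. The key identity to close the argument is $a'_{n}=(n+1)\,a_{n+1}$. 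To establish it, I would start from the definition $a'_n=\tfrac{1}{n!}\mathbb{E}[\sigma'(w)H_n(w)]$ with $w\sim\mathcal N(0,1)$, integrate by parts against the Gaussian density (using the assumed decay $\sigma(t)e^{-t^2/2}\to 0$ to kill the boundary term), which produces $\mathbb{E}[\sigma(w)(wH_n(w)-H_n'(w))]/n!$, and then use the Hermite recurrences \eqref{eq:rec_1}–\eqref{eq:rec_2} to collapse $wH_n-H_n'=H_{n+1}$. This yields $a'_n=\tfrac{1}{n!}\mathbb{E}[\sigma(w)H_{n+1}(w)]=(n+1)a_{n+1}$.

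Putting the two pieces together, the $i$th coefficient of $\kappa^{(0)}$ becomes $(a'_{i-1})^2=i^2 a_i^2$, so
\begin{equation*}
\kappa^{(0)}(\xx,\xx')=\sum_{i=1}^\infty i^2 a_i^2\,\langle\xx,\xx'\rangle^i,\qquad \kappa^{(1)}(\xx,\xx')=\sum_{i=0}^\infty a_i^2\,\langle\xx,\xx'\rangle^i,
\end{equation*}
and taking the ratio of the $i$th coefficients yields $r(i)=i^2$, as claimed.

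The main obstacle is the integration-by-parts step producing $a'_n=(n+1)a_{n+1}$: one must check that the boundary term at infinity vanishes, which is why the smoothness/growth hypothesis $\lim_{t\to\infty}\sigma(t)e^{-t^2/2}=0$ is invoked, and one must correctly track the cancellation $wH_n-H_n'=H_{n+1}$ from the two Hermite recurrences. The rest is a bookkeeping exercise, justified by the bounded convergence theorem already used in the proof of Lemma \ref{lem:inner_prod} to swap sums and expectations.
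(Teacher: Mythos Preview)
Your proposal is correct and follows essentially the same approach as the paper: apply Lemma~\ref{lem:inner_prod} to $\sigma$ and $\sigma'$ separately to obtain the power series of $\kappa^{(1)}=\psi_0$ and $\psi_1$, derive the identity $a'_n=(n+1)a_{n+1}$ via integration by parts against the Gaussian density and the Hermite recurrences, and then read off the coefficient ratio after the index shift from the extra factor $\langle\xx,\xx'\rangle$ in $\kappa^{(0)}$. Your bookkeeping is in fact cleaner than the paper's, which contains a labeling slip in \eqref{eq:k1_expand} (the roles of $a_i$ and $a'_i$ are swapped there), but the argument and its ingredients are identical.
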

While our analysis relies on Hermite polynomials, the amplification of high frequency terms due to differentiation is a general phenomemom. For isntance, the derivative of a $k_{th}$ degree sinusoid $sin(kx)$ is given by $k\times cos(kx)$. Thus, we expect a similar analysis to hold in other suitable bases of functions.

\subsection{Conversion to Gegenbauer (Ultraspherical) polynomial series}
Since Gegenbauer polynomials form a basis of polynomial functions on $[-1,1]$, they provide a convenient way to isolate the components of a dot-product kernel corresponding to different degrees of variation w.r.t the input space. Moreover, the Hecke-Funk Theorem allows us to express Gegenbauer polynomials applied to the inner product in terms of the spherical harmonic functions acting on the constituent vectors. This relationship can be utilized to obtain the mercer decomposition of the given dot-product kernel as described in the subsequent sections.
For our results, we utilize the following properties \citep{chen2021deep} of positive definite functions on spheres from the classical paper by \citet{Schoenberg}:
\begin{lemma}\label{lem:sphere_1}
\textbf{Power series expansion}: An inner product Kernel $K$, defined by a continuous function $f$ as $K(\xx,\xx')=f(\xx^\top \xx')$ for $\xx,\xx' \in \bS^{d-1}$ is a positive definite kernel for every $d$ if:
\begin{enumerate}
    \item $f(u)=\sum_{k=0}^\infty a_k u^k$, for a sequence $\{a_k\}_{k=0}^\infty$ satisfying $a_k\ge 0$ and $\sum_{k=0}^\infty a_k < \infty$.
    \item $f(u)=\sum_{k=0}^\infty b_k P_k(u)$, for a sequence $\{b_k\}_{k=0}^\infty$ satisfying $b_k\ge 0$ and $\sum_{k=0}^\infty b_k P_k(1)< \infty$, where $P_k$ denote the Gegenbauer polynomials corresponding to any given dimension.
\end{enumerate}
\end{lemma}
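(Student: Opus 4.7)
The plan is to establish each of the two sufficient conditions in Lemma \ref{lem:sphere_1} separately, using the closure properties of the cone of positive semi-definite (PSD) kernels under non-negative linear combinations, pointwise (Schur) products, and pointwise limits.

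For the first condition, I would start from the base kernel $K_0(\xx,\xx')=\xx^\top \xx'$, which is PSD because for any finite sample $\xx_1,\dots,\xx_N$ the corresponding Gram matrix factors as $XX^\top \succeq 0$. By the Schur product theorem, pointwise products of PSD kernels remain PSD, so $(\xx^\top \xx')^k$ is PSD for every integer $k \geq 0$, with the $k=0$ case giving the constant kernel $1$. A non-negative linear combination of PSD kernels is PSD, so every partial sum $\sum_{k=0}^{N} a_k (\xx^\top \xx')^k$ is PSD. Using $a_k \geq 0$, $\sum_k a_k <\infty$, and the bound $|\xx^\top \xx'|\leq 1$ on the sphere, the series converges uniformly on $\bS^{d-1}\times \bS^{d-1}$, so $f(\xx^\top \xx')$ is a pointwise limit of PSD kernels. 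Fixing any finite collection of evaluation points, the limit of a convergent sequence of PSD matrices is PSD since the PSD cone is closed, which finishes part (1).

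For the second condition, the key ingredient is the addition theorem for spherical harmonics on $\bS^{d-1}$:
\begin{equation*}
P_k(\xx^\top \xx') \;=\; c_{d,k}\sum_{m=1}^{N(d,k)} Y_{k,m}(\xx)\, Y_{k,m}(\xx'),
\end{equation*}
where $\{Y_{k,m}\}_m$ is an orthonormal basis of degree-$k$ spherical harmonics and $c_{d,k}>0$. Each rank-one outer product $Y_{k,m}(\xx)Y_{k,m}(\xx')$ is manifestly PSD, so the finite inner sum is PSD, and hence so is any non-negative linear combination $\sum_k b_k P_k(\xx^\top \xx')$. The normalization hypothesis $\sum_k b_k P_k(1)<\infty$ combined with the standard bound $|P_k(u)|\leq P_k(1)$ on $[-1,1]$ yields uniform convergence on $\bS^{d-1}\times \bS^{d-1}$, and the closed-cone argument from part (1) transfers positive semi-definiteness to the limit.

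The main subtle step is the passage from finite partial sums to the infinite series while preserving positive definiteness, which I would handle uniformly in both parts by the same recipe: for any fixed finite evaluation set the partial-sum Gram matrices live in the closed convex cone of PSD matrices, and the summability hypothesis guarantees entrywise convergence, so the limit matrix is PSD. Continuity of $f$ is not required for the PSD conclusion itself, but it ensures that the resulting kernel is a continuous function on the compact product space $\bS^{d-1}\times \bS^{d-1}$.
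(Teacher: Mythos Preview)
Your argument is correct: both parts follow the standard route of building the kernel as a pointwise limit of non-negative combinations of elementary PSD kernels (monomials via the Schur product theorem in part~1, Gegenbauer polynomials via the addition theorem in part~2), and then invoking closedness of the PSD cone. The only minor remark is that in part~2 the addition theorem you invoke is exactly Lemma~\ref{lem:funck} of the paper, so you are implicitly assuming that result; this is fine, but worth flagging since the paper states it separately.

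As for comparison with the paper: the paper does \emph{not} prove Lemma~\ref{lem:sphere_1} at all. It is quoted as a classical result of Schoenberg (via \citet{chen2021deep}) and used as a black box. Your write-up therefore supplies a self-contained proof where the paper gives none, and the approach you take is the standard one underlying Schoenberg's original characterization.
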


To obtain the Gegenbauer series coefficients from the corresponding power series, we start by expresing $u^i$ in the basis of Gegenbauer polynomials:
\begin{equation}\label{eq:power_legend}
    u^{i}=\sum_{l=0}^\infty \beta^{i}_{l} P_{l}(u)
\end{equation}
By applying part 1 of Lemma \ref{lem:sphere_1} to the function $u^{i}$, we note that $u^{i}$ is a positive definite function $\forall i \geq 0$. Therefore, using part 2 of Lemma \ref{lem:sphere_1}, $\beta^{i}_{l} \geq 0 \quad \forall i,l \geq 0$. Moreover, since $u^{i}$ lies in the span of Gegenbauer polynomials of degree $\leq i$, it is orthogonal to all higher degree Gegenbauer polynomials w.r.t the corresponding measure. Thus $\beta^{i}_{l} = 0 \quad \forall i < l$.
 
Consider a bounded function $g(u)$ admitting a power series expansion with positive coefficients $g(u)=\sum_{i=0}^\infty g_iu^i$ convergent in $(-1,1)$.
Since $P_i(u)$ is bounded and integrable in $[-1,1]$, applying the bounded convergence theorem yields:
\begin{equation}\label{eq:int}
    \int_{-1}^1 g(u) P_l(u)d\mu =  \sum_{m=l}^\infty g_i \int_{-1}^1 u^i P_l(u)d\mu = \sum_{m=l}^\infty g_m  
\end{equation}
where the sum starts from $l$ since $\beta^{i}_{l} = 0 \quad \forall i < l$ and $\mu$ denotes the corresponding dimension-dependent normalizing measure. The details of the measure and the assumptions are further described in the Appendix.
Let us now consider two functions $g(u),h(u)$ with the corresponding power series expansion $g(u)=\sum_{i=0}^\infty g_i u^i$ and $h(u)=\sum_{i=0}^\infty h_i u^i$.
The ratio $\frac{g_{P_l}}{h_{P_l}}$ of the components of the $l_{th}$ degree Gegenbauer polynomials for $g(u)$ and $h(u)$ is then given by:
\begin{equation}\label{eq:ratio_legend}
\begin{split}
\frac{g_{P_l}}{h_{P_l}} &= \frac{\int_{-1}^1 g(u) P_l(u)du}{\int_{-1}^1 h(u) P_l(u)du}\\
&= \frac{\sum_{m=l}^\infty g_m  \beta^{m}_{l}}{\sum_{m=l}^\infty h_m  \beta^{m}_{l}}
\end{split}    
\end{equation}
This leads us to the following lemma:
\begin{lemma}\label{lem:ratio_convert}
Let $g(u),h(u)$ denote two functions on $[-1,1]$ admitting power series expansions $g(u)=\sum_{i=0}^\infty g_i u^i$ and $h(u)=\sum_{i=0}^\infty h_i u^i$ such that $\frac{g_i}{h_i}$ is an non-decreasing function of $i$, then  the ratio of the corresponding Gegenbauer coefficients satisfies $\frac{g_{P_l}}{h_{P_l}} \geq \frac{g_l}{h_l}$.
\end{lemma}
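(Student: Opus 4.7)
The plan is to write the ratio of Gegenbauer coefficients as a weighted average of the pointwise ratios $g_m/h_m$ and then exploit monotonicity. Starting from the identity \eqref{eq:ratio_legend} established just before the lemma, namely
\begin{equation*}
\frac{g_{P_l}}{h_{P_l}} = \frac{\sum_{m=l}^\infty g_m\, \beta^{m}_{l}}{\sum_{m=l}^\infty h_m\, \beta^{m}_{l}},
\end{equation*}
I would rewrite the numerator by multiplying and dividing each term by $h_m$, yielding
\begin{equation*}
\frac{g_{P_l}}{h_{P_l}} = \frac{\sum_{m=l}^\infty (g_m/h_m)\, h_m\, \beta^{m}_{l}}{\sum_{m=l}^\infty h_m\, \beta^{m}_{l}}.
\end{equation*}
The right-hand side is a convex combination of the values $\{g_m/h_m\}_{m \geq l}$ with weights $w_m := h_m\, \beta^{m}_{l}$.

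Next I would verify non-negativity of the weights. The coefficients $\beta^{m}_{l}$ are non-negative by the argument already given in the paper: applying Lemma \ref{lem:sphere_1} part (1) shows that $u^m$ is a positive definite function, so by part (2) its Gegenbauer coefficients $\beta^{m}_{l}$ are non-negative. The positivity of $h_m$ is implicit in the setup (the lemma treats $g$ and $h$ as if the ratios $g_i/h_i$ are well-defined real numbers; in the application to our kernels both power series have non-negative coefficients). With $w_m \geq 0$ and assuming $h_{P_l} > 0$ so the denominator does not vanish, the expression is a genuine weighted average of the numbers $g_m/h_m$ over $m \geq l$.

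Finally I would invoke monotonicity: since $m \mapsto g_m/h_m$ is non-decreasing, every term in the weighted average for $m \geq l$ satisfies $g_m/h_m \geq g_l/h_l$. Hence the weighted average itself is at least $g_l/h_l$, giving $g_{P_l}/h_{P_l} \geq g_l/h_l$ as claimed. The proof is essentially a one-line observation once the averaging interpretation is in place; the only subtle point, which I would flag explicitly, is ensuring that all weights $h_m \beta^{m}_{l}$ are non-negative and that the denominator is strictly positive, so that the monotone-weighted-average step is valid.
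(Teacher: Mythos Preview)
Your proposal is correct and follows essentially the same argument as the paper: rewrite the numerator as $\sum_{m\ge l}(g_m/h_m)\,h_m\beta^m_l$, interpret the ratio as a weighted average with non-negative weights $h_m\beta^m_l$, and use monotonicity of $g_m/h_m$ to bound it below by $g_l/h_l$. You are slightly more careful than the paper in flagging the implicit positivity assumptions on $h_m$ and $h_{P_l}$, which is appropriate.
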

\begin{proof}
\begin{align*}
\frac{g_{P_l}}{h_{P_l}}  &= \frac{\sum_{m=l}^\infty g_m  \beta^{m}_{l}}{\sum_{m=l}^\infty h_m  \beta^{m}_{l}}=\frac{\sum_{m=l}^\infty \frac{g_m}{h_m} h_m \beta^{m}_{l}}{\sum_{m=l}^\infty h_m  \beta^{m}_{l}}\\
&\geq \frac{\sum_{m=l}^\infty  \frac{g_l}{h_l} h_m \beta^{m}_{l}}{\sum_{m=l}^\infty h_m  \beta^{m}_{l}}\\
&= \frac{g_l}{h_l}
\end{align*}
where the inequality follows from the non-decreasing nature of $\frac{g_m}{h_m}$.
\end{proof}
\subsection{Relation to the NTK's Spectrum}

Substituting $g(u)=u\psi_0(u)$ and $h(u)=\psi_1(u)$ in the above Lemma leads to:
\begin{align*}
    \frac{\rbr{u\psi_0(u)}_{P_l}}{\rbr{\psi_1(u)}_{P_l}} \geq \frac{(a'_i)^2}{a_i^2} = i^2.
\end{align*}

We formalize the above conclusion through the following corollary of Theorem  \ref{thm:two_layer} and Lemma \ref{lem:ratio_convert}:
\begin{cor}\label{cor:leg}
For two layer networks with an activation function $\sigma$ such that and $\sigma, \sigma'$ admit Hermite series expansions, let $\lambda^{(1)}_i,\lambda^{(2)}_i$ denote the component along the $i_{th}$ degree Gegenbauer polynomial of the contribution to the NTK by the first and second layer respectively. Then, $\lambda^{(1)}_i \geq i^2 \lambda^{(2)}_i \quad \forall i\geq 0$.
\end{cor}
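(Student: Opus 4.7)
The plan is a short deduction that chains Theorem \ref{thm:two_layer} with Lemma \ref{lem:ratio_convert}. First I would recall from Equations \ref{eq:k0_expand} and \ref{eq:k1_expand} that, as dot-product kernels on $\bS^{d-1}$, $\kappa^{(0)}$ and $\kappa^{(1)}$ admit convergent power series $\kappa^{(0)}(u) = \sum_{i=1}^{\infty} i^{2} a_{i}^{2}\, u^{i}$ and $\kappa^{(1)}(u) = \sum_{i=0}^{\infty} a_{i}^{2}\, u^{i}$. Both sequences of coefficients are non-negative, and their coefficient-by-coefficient ratio is exactly $i^{2}$, which is a non-decreasing function of $i$. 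This is precisely the content of Theorem \ref{thm:two_layer}.

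Second, I would invoke Lemma \ref{lem:ratio_convert} with $g(u) = \kappa^{(0)}(u)$ and $h(u) = \kappa^{(1)}(u)$. The hypotheses of the lemma (positivity of the power-series coefficients, convergence on $(-1,1)$, boundedness) are immediate once one checks $\sum_i a_i^{2} < \infty$ and $\sum_i i^{2} a_i^{2} < \infty$, which are the Parseval identities for $\sigma$ and $\sigma'$ respectively and hence guaranteed by the standing hypothesis that both $\sigma$ and $\sigma'$ admit Hermite series expansions. The lemma then yields $\lambda^{(1)}_{l}/\lambda^{(2)}_{l} = g_{P_{l}}/h_{P_{l}} \geq g_{l}/h_{l} = l^{2}$, i.e. the claim $\lambda^{(1)}_{l} \geq l^{2}\, \lambda^{(2)}_{l}$ for every $l \geq 1$. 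The case $l=0$ is trivial, since $\lambda^{(1)}_{0}, \lambda^{(2)}_{0} \geq 0$ by Lemma \ref{lem:sphere_1} (the Gegenbauer coefficients of a positive-definite dot-product kernel are non-negative), so the inequality reduces to $\lambda^{(1)}_{0} \geq 0$.

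I do not foresee a substantive obstacle here: the genuine technical work --- the identity $a'_{i-1} = i\, a_{i}$ and the consequent $i^{2}$ amplification of high-frequency coefficients --- is already packaged inside Theorem \ref{thm:two_layer}, while the transfer from a monotone power-series ratio to a monotone Gegenbauer ratio is exactly what Lemma \ref{lem:ratio_convert} provides. The only mildly delicate point is confirming that the $\lambda^{(j)}_{i}$ in the statement are the same objects as the $g_{P_{l}}, h_{P_{l}}$ of Equation \ref{eq:ratio_legend}, both being coefficients in the Gegenbauer expansion of the corresponding dot-product function; this is a definitional check. One could optionally upgrade the conclusion from Gegenbauer coefficients to genuine eigenvalues of the integral operator against $U(\bS^{d-1})$ via the Funk--Hecke formula, which introduces only a positive dimension-dependent proportionality constant and hence preserves the inequality, but this strengthening is not required by the corollary as stated.
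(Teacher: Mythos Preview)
Your proposal is correct and follows essentially the same route as the paper: the corollary is stated there as an immediate consequence of Theorem~\ref{thm:two_layer} (the power-series ratio is $i^{2}$) combined with Lemma~\ref{lem:ratio_convert} (monotone power-series ratios transfer to Gegenbauer ratios), with the substitution $g=\kappa^{(0)}$, $h=\kappa^{(1)}$. If anything, you are more careful than the paper in verifying the hypotheses of Lemma~\ref{lem:ratio_convert} via Parseval for $\sigma,\sigma'$ and in handling the $l=0$ boundary case.
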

\subsubsection{Spherical Harmonics}
Since the inner product between two points $\rbr{\langle x', x \rangle}$ lies in the range $[-1,1]$, any dot-product kernel can be expressed in the basis of Gegenbauer polynomials.
 Subsequently, using the Funk-Hecke formula \citep{frye2012spherical}, we can obtain the components of the kernel in the basis of products of the corresponding spherical harmonics, leading to the Mercer decomposition of the kernels. Analogous to the exponential functions on the real line, the spherical harmonics are eigenfunctions of the Laplace–Beltrami operator defined on the unit sphere.
 Let $\lambda_{0,k}, \lambda_{1,k}$ denote the coefficients in the Gegenbauer series expansion for $u\psi_0(u)$ and $\psi_1(u)$ respectively. We utilize the following standard identity, whose proof for arbitrary dimensions can be found in \citet{frye2012spherical}:
 \begin{lemma}\label{lem:funck}
 
 The value of the $k_{th}$ degree Gegenbauer polynomial $P_k$ as a function of the inner product $\langle \xx', \xx \rangle$ of two points $\xx',\xx$ lying on the unit sphere $\mathbb{S}^{d-1}$ can be diagonalized in the basis of the tensor product of spherical harmonics as follows:
 \begin{align*}
P_k(\langle \xx', \xx \rangle)=\frac{1}{N(d,k)}\sum_{j=1}^{N(d,k)} Y_{k,j}(\xx') Y_{k,j}(\xx)
 \end{align*}

 \end{lemma}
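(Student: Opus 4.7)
The plan is to identify both sides of the identity as rotation-invariant reproducing kernels for the degree-$k$ spherical harmonic subspace $\mathcal{H}_k \subset L^2(\mathbb{S}^{d-1})$, and then to pin down a single normalization constant. First I would define $Z(\xx',\xx) := \sum_{j=1}^{N(d,k)} Y_{k,j}(\xx') Y_{k,j}(\xx)$ and note that since $\{Y_{k,j}\}_{j=1}^{N(d,k)}$ is an $L^2$-orthonormal basis of $\mathcal{H}_k$, the function $Z$ is the reproducing kernel of $\mathcal{H}_k$; in particular it does not depend on the choice of orthonormal basis. Next, for any rotation $R \in O(d)$, the family $\{Y_{k,j} \circ R\}$ is again an orthonormal basis of $\mathcal{H}_k$ (because $\mathcal{H}_k$ is $O(d)$-invariant and the surface measure is rotation invariant), so basis-independence forces $Z(R\xx', R\xx) = Z(\xx',\xx)$. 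Joint rotation invariance implies $Z(\xx',\xx) = F_k(\langle \xx',\xx \rangle)$ for some univariate function $F_k:[-1,1]\to\mathbb{R}$.

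Second, I would fix $\xx$ and regard the map $\xx' \mapsto F_k(\langle \xx',\xx \rangle)$ as an element of $\mathcal{H}_k$ that is invariant under the point-stabilizer $O(d-1) \subset O(d)$ fixing $\xx$. A standard fact — provable either via the second-order ODE satisfied by the restriction of a degree-$k$ spherical harmonic to a great circle through $\xx$, or equivalently by a dimension count showing that the $O(d-1)$-invariants inside the irreducible $O(d)$-module $\mathcal{H}_k$ form a one-dimensional subspace — is that such a zonal harmonic is unique up to scalar and is a constant multiple of $\xx' \mapsto P_k(\langle \xx',\xx \rangle)$. Hence $F_k(t) = c_k P_k(t)$ for some constant $c_k$, which I would pin down by evaluating at $\xx' = \xx$ and integrating over $\mathbb{S}^{d-1}$: orthonormality of the $Y_{k,j}$ gives $\int_{\mathbb{S}^{d-1}} Z(\xx,\xx)\,d\sigma(\xx) = N(d,k)$, which under the Gegenbauer normalization used throughout the paper (the convention implicit in Lemma \ref{lem:sphere_1}) yields $c_k = N(d,k)$ and hence the claimed prefactor $1/N(d,k)$.

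The hard part will be the uniqueness-of-zonal-harmonics step: showing that the $O(d-1)$-invariant subspace of $\mathcal{H}_k$ is one-dimensional and is spanned by $\xx' \mapsto P_k(\langle \xx',\xx \rangle)$. This is the genuine analytic/representation-theoretic content of the Funk-Hecke addition formula and rests either on the irreducibility of $\mathcal{H}_k$ as an $O(d)$-representation together with a Schur-type argument, or on an explicit treatment of the radial ODE characterizing $P_k$. The remaining steps — recognising $Z$ as a reproducing kernel, deducing joint rotation invariance, and fixing the constant by integration — are essentially bookkeeping and follow the template in \citet{frye2012spherical} cited in the statement.
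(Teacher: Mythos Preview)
Your argument is correct and is precisely the standard derivation of the addition formula via the reproducing-kernel / zonal-harmonic route; the identification of the hard step (one-dimensionality of the $O(d-1)$-invariants in $\mathcal{H}_k$) is accurate. Note, however, that the paper does not actually prove this lemma: it is quoted as a ``standard identity, whose proof for arbitrary dimensions can be found in \citet{frye2012spherical}'' and used as a black box, so there is no paper-side proof to compare against --- your sketch simply supplies what the cited reference would contain.
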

 Substituting in the Gegenbauer series expansions for $\psi_0$ and $\psi_1$, we obtain:
\begin{equation}\label{eq:sphharmonic}
\begin{split}
&\langle \xx', \xx \rangle \psi_0(\langle \xx', \xx \rangle)) = \sum_{k=0}^\infty \lambda_{0,k} P_k(\langle \xx', \xx \rangle)\\ 
&= \sum_{k=0}^\infty \lambda_{0,k} \frac{1}{N(d,k)} \sum_{j=1}^{N(d,k)} Y_{k,j}(\xx') Y_{k,j}(\xx) \\
&\psi_1(\langle \xx', \xx \rangle)) = \sum_{k=0}^\infty \lambda_{1,k} P_k(\langle \xx', \xx \rangle)\\ 
&= \sum_{k=0}^\infty \lambda_{1,k} \frac{1}{N(d,k)} \sum_{j=1}^{N(d,k)} Y_{k,j}(\xx') Y_{k,j}(\xx)
\end{split}
\end{equation}
Thus the ratio of the eigenvalues for the two kernels corresponding to a $k_{th}$ degree spherical harmonic is simply given by $\frac{\lambda_{0,k} \frac{1}{N(d,k)}}{\lambda_{1,k} \frac{1}{N(d,k)}} = \frac{\lambda_{0,k}}{\lambda_{1,k}}$
Using the above relationship and corollary \ref{cor:leg}, we have the following corollary:
 \begin{cor} 
 For a functional derivative $d|_{f(t)}$ lying in the eigenspace of $i_{th}$ degree spherical harmonics, the ratio of decrements satisfies $d|_{f(t)} \geq i^2$.
  \end{cor}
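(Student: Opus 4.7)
The plan is to combine the Mercer decomposition of $\kappa^{(0)}$ and $\kappa^{(1)}$ in Equation \ref{eq:sphharmonic} with the eigen-ratio identity \ref{eq:eigen_ratio} and then invoke Corollary \ref{cor:leg}. The key observation driving everything is that the two layer-wise kernels, being dot-product kernels on $\mathbb{S}^{d-1}$, admit a \emph{common} orthonormal basis of Mercer eigenfunctions given by the spherical harmonics $Y_{k,j}$; only the eigenvalues differ. So the problem collapses into comparing eigenvalues in the $i$-th spherical harmonic subspace.

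First I would read off, from Equation \ref{eq:sphharmonic}, that for each degree $k$ and each $j \in \{1,\dots,N(d,k)\}$, the function $Y_{k,j}$ is an eigenfunction of $\kappa^{(0)}$ with eigenvalue $\lambda_{0,k}/N(d,k)$ and an eigenfunction of $\kappa^{(1)}$ with eigenvalue $\lambda_{1,k}/N(d,k)$. Taking any functional derivative $d|_{f(t)}$ that lies in the span of the $i$-th degree spherical harmonics, it is simultaneously an eigenvector of $\kappa^{(0)}$ and $\kappa^{(1)}$ with those eigenvalues (by linearity, since the eigenvalue is the same for every $j$ at fixed $k=i$). Equation \ref{eq:eigen_ratio} then immediately gives
\begin{equation*}
\frac{d^{(0)}(d|_{f(t)})}{d^{(1)}(d|_{f(t)})} \;=\; \frac{\lambda_{0,i}/N(d,i)}{\lambda_{1,i}/N(d,i)} \;=\; \frac{\lambda_{0,i}}{\lambda_{1,i}},
\end{equation*}
so the $N(d,i)$ multiplicity factor cancels cleanly.

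Next I would invoke Corollary \ref{cor:leg}, which tells us that the Gegenbauer-coefficient ratio satisfies $\lambda_{0,i}/\lambda_{1,i} \geq i^2$. Chaining this with the identity above yields the desired inequality
\begin{equation*}
\frac{d^{(0)}(d|_{f(t)})}{d^{(1)}(d|_{f(t)})} \;\geq\; i^2.
\end{equation*}
To reach Corollary \ref{cor:leg} cleanly, one only needs to check that the setting of Lemma \ref{lem:ratio_convert} applies to $g(u) = u\psi_0(u)$ and $h(u) = \psi_1(u)$: both admit nonnegative power series expansions by Equation \ref{eq:k1_expand} and Equation \ref{eq:k0_expand}, and the coefficient ratio at degree $i$ is exactly $i^2$, which is nondecreasing in $i$.

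The only nontrivial step is the first one: ensuring the ``same eigenbasis'' argument is rigorously stated, because $d|_{f(t)}$ is a priori only an element of an $N(d,i)$-dimensional eigenspace of each kernel, not a single harmonic. The main obstacle is to observe that because within the $i$-th harmonic subspace both kernels act as scalar multiples of the identity (with multiplier $\lambda_{0,i}/N(d,i)$ and $\lambda_{1,i}/N(d,i)$ respectively), \emph{any} vector in that subspace is simultaneously an eigenvector of both, so Equation \ref{eq:eigen_ratio} applies without having to pick a specific $Y_{i,j}$. Once this is noted, the rest is a direct composition of the identities already established.
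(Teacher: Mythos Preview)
Your proposal is correct and follows essentially the same route as the paper: read off the shared spherical-harmonic eigenbasis and eigenvalues from Equation~\ref{eq:sphharmonic}, cancel the $N(d,i)$ multiplicities to reduce the decrement ratio to $\lambda_{0,i}/\lambda_{1,i}$, and then apply Corollary~\ref{cor:leg}. Your added remark that both kernels act as scalar multiples of the identity on the degree-$i$ harmonic subspace (so that \emph{any} $d|_{f(t)}$ in that subspace is a simultaneous eigenvector) is a useful clarification that the paper leaves implicit.
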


\subsection{Convergence Rate while Training Individual Layers}

The spectral bias of the Kernels corresponding to a given layer is related to the rate of convergence along different eigenfunctions while training only the corresponding layer. Concretely, we observe that while training only the parameters of the $\ell_{th}$ layer i.e. $\theta_\ell$, the changes in the output are described by the Kernel $\kappa^{(\ell)}(\xx,\xx') =  \langle\nabla_{\theta_\ell} f(\xx), \nabla_{\theta_\ell} f(\xx') \rangle$
Thus, analogous to Equation \ref{eq:funct}, the evolution of the cost upon training the $\ell_{th}$ layer, while keeping the other layers fixed is given by:
\begin{equation}
\begin{split}
    	\partial_t C|_{f(t)} =-\left\|d|_{f(t)} \right\|_{\kappa^{(\ell)}}^{2},
\end{split}
\end{equation}
where $f_t$ denotes the output function corresponding to the full network. Now, suppose that the cost functional can be decomposed along the eigenfunctions of $\kappa^{(\ell)}$ as  $d|_{f(t)} = \sum_{i=1}^k d|^{(\lambda_i)}_{f(t)}$, where $ d|^{(\lambda_i)}_{f(t)}$ corresponds to the component along the eigenfunction with eigenvalue $\lambda_i$.
Then, due to the orthonormality of the eigenfunctions, we obtain:
\begin{equation}
\begin{split}
    	\partial_t C|_{f(t)} =-\sum_{i=1}^k \lambda_i\left\|d^{(\lambda_i)}|_{f(t)} \right\|_{\kappa^{(\ell)}}^{2},
\end{split}
\end{equation}
Thus, the decrements in the cost can be decomposed along the contributions from different eigenvalues, with the rate of decrease of the corresponding contribution being proportional to the magnitude of the eigenvalue. Our analysis thus predicts that, while training the initial layers, the relative rate of convergence of the function along different eigenfunctions should be larger for higher frequency directions when compared to when training the latter layers.

\section{Experiments}
We empirically verify the validity of our theoretical analysis in both synthetic datasets of spherical harmonics as well as the high dimensional image dataset of MNIST \citep{deng2012mnist}. In all our settings, we measure the norms of different directions in function space, for a given Kernel, as defined in Equation \label{eq:funct} through the quadratic forms on the corresponding gram matrix defined on training points. To compare relative contributions, with further divide each layer's quadratic form by the corresponding value for the last layer. Thus in all our plots, the ``contributions'' denote the ratio of the projections of a given target vector along the gram matrices corresponding to the given layer and the last layer. We include a full definition of the plotted quantities in the Appendix. Additional results for other datasets,  details of the experiments, and experiments for rates of convergence, as discusses in section 5.7, are also provided in the Appendix.
The relative values of the contributions in different settings indicate the prevalence of the layer-wise spectral bias in finite dimensional networks, supporting our analysis. In all our plots, confidence intervals are evaluated over random initializations.

\subsection{Spherical Harmonics}

We plot the layer-wise contributions for spherical harmonics corresponding to input dimension 2 and 10 in Figures \ref{fig:dim2} and \ref{fig:dim10} respectively.
\subsubsection{Two Dimensions}

In two dimensions, the $k_{th}$ degree Gegenbauer polynomial is simply $\cos{k(u)}$, and the spherical harmonics simply correspond to the cosine and sine functions of the $k_{th}$ degree in terms of the polar angle of the input points. 
For this simple setup, we validate our theoretical analysis through experiments on $l2$ regression task with uniformly distributed data on the unit sphere corresponding to dimension $2$. Each input datapoint is thus described as $\xx_\theta =(\cos{\theta},\sin{\theta})$ with $\theta$ being uniformly distributed in the interval $[0,2\pi]$. 
We use a fully connected network with four layers, and consider the quadratic forms for the Kernel gram matrix, evaluated at functions $\cos{\theta},\cos{2\theta},\cos{3\theta},\cos{4\theta}$. To obtain the relative magnitudes of the contributions of different layers for each degree function, we normalize the each contribution by the corresponding contribution of the last layer.

\begin{figure}[h]
    \centering
    \includegraphics[width=\columnwidth]{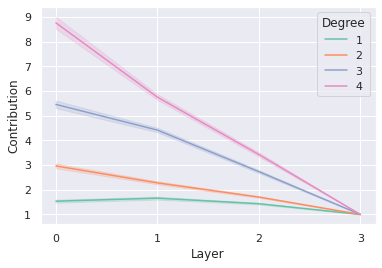}
    \caption{Layer wise contributions for spherical harmonics of different degrees under Relu activation and input dimension 2}
    \label{fig:dim2}
\end{figure}
\vspace{-4mm}
\subsection{Higher Dimensions}
For higher dimensions, we utilize the fact that the  functions of the form $\sum_{l=1}^L a_l P_k(\lin{\xx}{\xx_l})$, for fixed vectors $\xx_l$ are linear combinatations of $k_{th}$ degree spherical harmonics. This can be proved by substituting the expansion of $P_k$ in terms of spherical harmonics, as given by Lemma \ref{lem:funck}:
\begin{align*}
    &\sum_{l=1}^{L} a_l P_k(\langle \xx', \xx \rangle)=\sum_{l=1}^L \frac{a_l}{N(d,k)}\sum_{j=1}^{N(d,k)} Y_{k,j}(\xx_l) Y_{k,j}(\xx)\\
    &= \sum_{j=1}^{N(d,k)} b_j Y_{k,j}(\xx),
\end{align*}
where $b_j = \frac{1}{N(d,k)} \rbr*{\sum_{l=1}^L a_l  Y_{k,j}(\xx_l)}$
Thus, we can sample functions lying in the space spanned by the spherical harmonics of degree $k$ by taking linear combinations of the $k_{th}$ degree Gegenbauer polynomial evaluated at the inner product with randomly sampled points.
.For our experiments, we set all $a_l=1$ consider input dimension $10$ and sample $L=4$ points from the uniform distribution on the unit sphere. Note that, since we evaluate the ratio of contributions, the values remain the same upon scaling the functions with arbitrary constants.

\begin{figure}[h]
    \centering
    \includegraphics[width=\columnwidth]{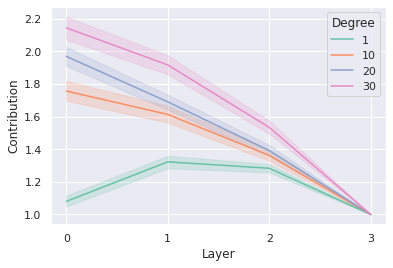}
    \caption{Layer wise contributions for spherical harmonics of different degrees under Relu activation and input dimension 10.}
    \label{fig:dim10}
\end{figure}

In the appendix, we provide additional results for other activation functions and settings.
\subsection{MNIST}

\begin{figure}[h]
    \centering
    \includegraphics[width=\columnwidth]{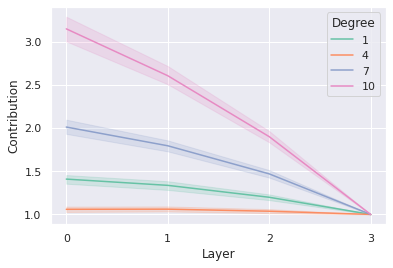}
    \caption{Layer wise contributions of different layers for radial noise of different degrees under Relu activation and the MNIST dataset. }
    \label{fig:mnist_rad}
\end{figure}

While our theoretical analysis is based on inputs distributed on the sphere, we hypothesize that a similar phenomenon of amplification of initial layers' contributions for high frequency directions occurs in the case of more complex high dimensional datasets. 
To validate this, we consider the MNIST dataset \citep{deng2012mnist}, corresponding to images of dimension $28 \times 28$. Since an analysis of the Mercer decomposition for such a dataset is intractable, we utilize the following set of functions to represent different frequencies of variation in the input space.:
\begin{enumerate}
    \item Following \citet{pmlr-v97-rahaman19a}, we consider radial noise of different frequencies, i.e. noise of the form $\sin(k\norm{x})$of different frequencies.
    \item Following \citet{Xu_2020}, we consider the sine and cosine functions of different degrees, defined along the top principal component of the input data.
    \item Unlike the case of uniform distribution on the sphere, the kernels corresponding to contributions from different layers may not be diagonalizable on a common basis of orthonormal eigenfunctions. However, we can approximate such a shared basis for the layer wise Kernels using the eigenvalues for the Kernel corresponding to the full network.
    \end{enumerate}
In each case, we consider a Relu network containing 4 fully connected layers and evaluate the quadratic forms for NTK gram matrix of different layers at functions of different frequency. We provide the results for the first setting in Figure 3, while the results for the remaining settings are provided in the Appendix.

\section{Future Work and Limitations}
While the NTK framework provides insights into the inductive biases of Deep Neural Networks, trained under gradient descent, it's applicability is limited to settings belonging in the ``lazy training'' regime \citep{lazy}. In particular, since the NTK setting assumes that the weights remain near the initialization, it does not directly explain  the properties of feature learning in deep neural networks. However, the kernel and related objects at initialization can still allow us to characterize the initial phase of training.
A promising direction could be to extend our analysis to the spectral properties of intermediate layer near initialization. Extending our analysis to multiple layers is complicated by the presence of products and compositions of power series. Future work could involve avoiding such impediments by utilizing more general characterizations of frequencies in the kernel feature space.

\bibliography{aistats}
\bibliographystyle{abbrvnat}
\pagebreak
\onecolumn
\aistatstitle{
Supplementary Material}
\section{Full Proof of Theorem 1}
We follow the notation in Section 5.3. We recall that Hoeffding's inequality leads to the following two inequalities:
\begin{align*}
    &\Prob{\abs{\frac{1}{n} \sum_{i=1}^n \phi_1(\xx_i)-\lambda_1}
    \geq t}\\&\leq 2e^{-2\frac{nt^2}{AC^2}}\\
    &\Prob{\abs{\frac{1}{n} \sum_{i=1}^n \phi_2(\xx_i)-\lambda_2}
    \geq t}\\ &\leq 2e^{-2\frac{nt^2}{BC^2}}\\
\end{align*}
Our aim is to utilize the above two inequalities to bound $\frac{\frac{1}{n} \sum_{i=1}^n  \phi_1(\xx_i)}{\frac{1}{n} \sum_{i=1}^n \phi_2(\xx_i)}$ and $\frac{\lambda_1}{\lambda_2}$. We proceed by bounding the absolute difference between the ratios in terms of the absolute differences between the numerators and denominators as follows:
\begin{align*}
    &\abs{\frac{\frac{1}{n} \sum_{i=1}^n \phi_1(\xx_i)}{\frac{1}{n} \sum_{i=1}^n \phi_2(\xx_i)} - \frac{\lambda_1}{\lambda_2}}\\
    &= \abs{\frac{\frac{1}{n} \sum_{i=1}^n \phi_1(\xx_i)\lambda_2-\frac{1}{n} \sum_{i=1}^n  \phi_2(\xx_i)\lambda_1}{\frac{1}{n} \sum_{i=1}^n  \phi_2(\xx_i)\lambda_2}}\\
    &= \abs{\frac{\frac{1}{n} \sum_{i=1}^n \phi_1(\xx_i)\lambda_2-\lambda_1\lambda_2+\lambda_1\lambda_2-\frac{1}{n} \sum_{i=1}^n  \phi_2(\xx_i)\lambda_1}{\frac{1}{n} \sum_{i=1}^n  \phi_2(\xx_i)\lambda_2}}\\
    &\leq \frac{\abs{\frac{1}{n} \sum_{i=1}^n \phi_1(\xx_i)-\lambda_1}}{\frac{1}{n} \sum_{i=1}^n  \phi_2(\xx_i)}+\frac{\abs{\frac{1}{n} \sum_{i=1}^n  \phi_2(\xx_i)-\lambda_2}\lambda_1}{\frac{1}{n} \sum_{i=1}^n  \phi_2(\xx_i)\lambda_2}
\end{align*}
Next, we observe that to ensure that the above difference is bounded by $\epsilon\frac{\lambda_1}{\lambda_2}$ it is sufficient to have:
\begin{align*}
    \frac{\abs{\frac{1}{n} \sum_{i=1}^n \phi_1(\xx_i)-\lambda_1}}{\frac{1}{n} \sum_{i=1}^n  \phi_2(\xx_i)} \leq \frac{\epsilon}{2}\frac{\lambda_1}{\lambda_2}\\
    \frac{\abs{\frac{1}{n} \sum_{i=1}^n  \phi_2(\xx_i)-\lambda_2}\lambda_1}{\frac{1}{n} \sum_{i=1}^n  \phi_2(\xx_i)\lambda_2} \leq \frac{\epsilon}{2}\frac{\lambda_1}{\lambda_2}
\end{align*}
Now suppose we have $\abs{\frac{1}{n} \sum_{i=1}^n  \phi_2(\xx_i)-\lambda_2} \leq \frac{\epsilon\lambda_2}{4}$ and$\abs{\frac{1}{n} \sum_{i=1}^n \phi_1(\xx_i)-\lambda_1} \leq \frac{\epsilon\lambda_1}{4}$ for some $\epsilon \leq \frac{1}{2}$ Then the ratios can be bounded as follows:
\begin{align*}
    &\frac{\abs{\frac{1}{n} \sum_{i=1}^n \phi_1(\xx_i)-\lambda_1}}{\frac{1}{n} \sum_{i=1}^n  \phi_2(\xx_i)}\\
    &\leq \frac{\epsilon\lambda_1}{4\lambda_2(1-\frac{\epsilon}{4})}\\
    &\leq \frac{\lambda_1}{\lambda_2}\frac{\epsilon}{4}(1+\frac{\epsilon}{2})
     \leq \frac{\lambda_1}{\lambda_2}\frac{\epsilon}{2}\\
     &\frac{\abs{\frac{1}{n} \sum_{i=1}^n \phi_2(\xx_i)-\lambda_2}}{\frac{1}{n} \sum_{i=1}^n  \phi_2(\xx_i)}\frac{\lambda_1}{\lambda_2}\\ &\leq \frac{\epsilon\lambda_2}{4\lambda_2(1-\frac{\epsilon}{4})\lambda_2}\frac{\lambda_1}{\lambda_2}\\ 
    &\leq \frac{\lambda_1}{\lambda_2}\frac{\epsilon}{4}(1+\frac{\epsilon}{2})
     \leq \frac{\lambda_1}{\lambda_2}\frac{\epsilon}{2}
\end{align*}
Thus, using union bound, we obtain:
\begin{align*}
    &\Prob{\abs{\frac{\frac{1}{n} \sum_{i=1}^n \phi_1(\xx_i)}{\frac{1}{n} \sum_{i=1}^n \phi_2(\xx_i)} - \frac{\lambda_1}{\lambda_2}} \geq \epsilon}\\ 
    &\leq   2e^{-2\frac{n\lambda_1^2\epsilon^2}{AC^2}} + 2e^{-2\frac{n\lambda_2^2\epsilon^2}{BD^2}} 
\end{align*}

Finally, we observe that for $\epsilon \leq (\log{\frac{4}{\delta}})\frac{1}{\sqrt{n}}\max\cbr*{\sqrt{\frac{AC^2}{2\lambda_1^2}},\sqrt{\frac{BD^2}{2\lambda_2^2}}}$, we have $2e^{-2\frac{n\lambda_1^2\epsilon^2}{AC^2}} + 2e^{-2\frac{n\lambda_2^2\epsilon^2}{BD^2}} \leq \delta$. This directly leads to the statement of Theorem 1.

\section{Proof of Lemma 1}

We have $\xi(x) = \sum_{i=0}^\infty e_iH_i (x)$ and $\xi(y) = \sum_{i=0}^\infty e_iH_i (y)$. Thus
\begin{align*}
    \Ea{\xi(x)\xi(y)} &= \Ea{(\sum_{i=0}^\infty e_iH_i (x))(\sum_{j=0}^\infty e_iH_i (y))}\\ 
    &=\sum_{i=0}^\infty\sum_{j=0}^\infty\Ea{e_ie_jH_i (x)H_j (y)} = \sum_{i=0}^\infty e_i^2\rho^i.
\end{align*}

We recall that the Hermite polynomials $H_i$ form a basis for an infinite dimensional inner product (Hilbert) space. Thus we have $\sum_{i=0}^\infty e_i^2 \leq \infty$. Moreover, using Cauchy-Shwartz, we further have $\Ea{H_i (x)H_j (y)} \leq  \Ea{H_i (x)^2}^{\frac{1}{2}}\Ea{H_i (y)^2}^\frac{1}{2} = 1$. Thus the bounded convergence theorem allows expressing the above integral as an infinite series. Finally, using Equation 11 yields  $\Ea{\xi(x)\xi(y)} = \sum_{i=0}^\infty e_i^2\rho^i$.

\section{Full Derivation for the Hermite Coefficients of $\sigma'$}

We utilize the recurrence relations for $H_n$ and integration by parts as follows:
\begin{equation}
\begin{split}
    &a'_n = \frac{1}{n!}\Eb{w \sim \mathcal N(0, 1)}{\sigma'(w) H_{n}(w)}\\
    &= \frac{1}{n!}\rbr*{\int_{-\infty}^\infty \frac{1}{\sqrt{2\pi}} \sigma'(w) H_{n}(w) e^{-\frac{w^2}{2}} dw}\\
    &= \frac{1}{n!}\rbr*{\frac{1}{\sqrt{2\pi}}\sigma(w) H_{n}(w) e^{-\frac{w^2}{2}}\vert_{-\infty}^\infty }\\
    &- \frac{1}{n!}\rbr*{\int_{-\infty}^\infty \frac{1}{\sqrt{2\pi}} \sigma(w) \frac{d\rbr{H_{n}(w)e^{-\frac{w^2}{2}}}}{dw} dw}\\
    &= - \frac{1}{n!}\rbr*{\int_{-\infty}^\infty \frac{1}{\sqrt{2\pi}} \sigma(w) \frac{d\rbr{H_{n}(w)e^{-\frac{w^2}{2}}}}{dw} dw}\\
    &= \frac{1}{n!}\Eb{w \sim \mathcal N(0, 1)}{\sigma(w) (w) H_{n}(w)}\\
    &- \frac{1}{n!}\Eb{\bw \sim \mathcal N(0, 1)}{\sigma(w) H_{n}'(w)} \quad \text{(using Equations 14 and 15}\\
    &= \frac{1}{n!}\Eb{\bw \sim \mathcal N(0, 1)}{\sigma(w) \rbr*{H_{n+1}(w)}}\\
     &= \frac{n+1}{(n+1)!}\Eb{\bw \sim \mathcal N(0, 1)}{\sigma(w) \rbr*{H_{n+1}(w)}}\\
    &= (n+1)a_{n+1}\\ 
\end{split}
\end{equation}
\section{Gegenbauer Coefficients in $d$ dimensions and Proof of Equation \ref{eq:int}}

The Gegenbauer polynomials in dimension $d$ form an orthonormal basis for the Hilbert space $L_2([-1,1],\nu)$ where $\nu(u) = (1-u^2)^\frac{(d-3)}{2}$. Note that the corresponding polynomials $P_k$ are continuous, and hence bounded in the compact interval $[-1,1]$ by a constant say $C_k$. Moreover, by assumption, all the coefficients $g_i$ in the power series expansion $g(u)=\sum_{i=0}^\infty g_iu^i$ are positive. Therefore, for $u \in [-1,1]$, $g(u) =\sum_{i=0}^\infty g_iu^i \leq \sum_{i=0}^\infty g_i = 1$. Subsequently  $\sum_{i=0}^j g_iu^i P_k(u) \leq C_k g(1)$.  Thus, using the bounded convergence theorem, we have:
\begin{align*}
    \int_{-1}^1 g(u) P_l(u)d\mu &= \sum_{m=0}^\infty g_i \int_{-1}^1 u^i P_l(u)d\mu\\
    &=\sum_{m=l}^\infty g_i \int_{-1}^1 u^i P_l(u)d\mu = \sum_{m=l}^\infty g_m  \beta^{m}_{l},
\end{align*}
where the second inequality follows from the observation that all polynomials of degree $< l$  lie in the span of $P_m$ with $m < l$ and are thus orthogonal to $P_l(u)$ to w.r.t the measure $\nu$.

\section{Experiments for Convergence under Layer-wise Training}
Consider a target function of the form $y^* = \sum_{i=1}^k y^*_i$, where $y^*_i$ is a function of degree $i$, with $y^*_i, y^*_j$ being orthogonal w.r.t the kernel $\kappa_{NTK}$ for $i\neq j$. For a given set of $n$ training points, we let $\yy,\yy^*_i,\yy^*$ denote the $n$ dimensional vectors containing the outputs of the network and the values of the functions $\yy^*_i,\yy^*$ evaluated at these $n$ points.
We define the residual component $r_i$ for degree $i$ as the squared projection of the residual $(\yy-\yy^*)$ along the $i_{th}$ degree component $\yy_i^*$ i.e $r_i=\frac{((\yy-\yy^*)^T(\yy_i^*))^2}{\norm{\yy_i^*)}^2}$. 
For our experiments, we consider $y^*_i$ corresponding to spherical harmonics of different degrees with input dimension $10$. We use a four layer fully connected network with each layer of width $100$.

\begin{figure}
    \subfigure{\includegraphics[width=0.5\textwidth]{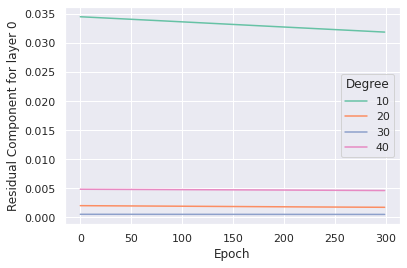}}
    \subfigure{\includegraphics[width=0.5\textwidth]{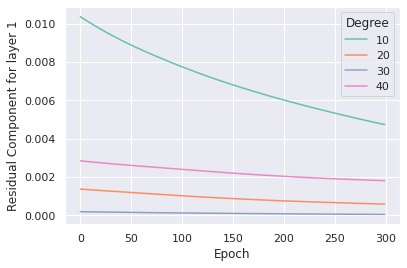}}
    \subfigure{\includegraphics[width=0.5\textwidth]{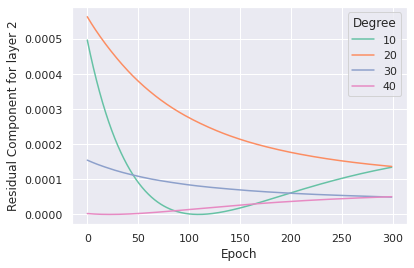}}
    \subfigure{\includegraphics[width=0.5\textwidth]{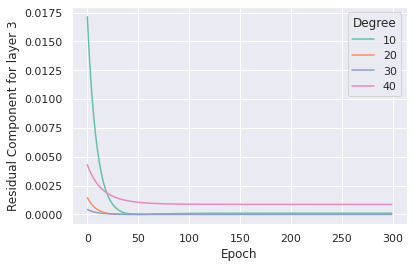}}
\caption{Evolution of the Residual Components corresponding to different degrees when individually training different layers}
\end{figure}

From  the above figure, we observe that the initial layers have the least residual component for the high degree terms, and lead to significantly slower convergence for the low degree terms. Contrarily, the latter layers lead to faster convergence for the low degree terms and have relatively high residual components for the high degree terms.

\section{Definition of Contribution in Empirical Results}

For a given set of training points $\xx_1,\cdots,\xx_n$ and a kernel $\kappa^{(\ell)}$, corresponding to the $\ell_{th}$ layer, let $G^{(\ell)}$ denote the gram matrix with entries $G^{(\ell)}_{i,j} = \kappa^{(\ell)}(\xx_i,\xx_j)$. Then, for a given function $f(\xx)$, having values $\yy=(f(\xx_1),\cdots, f(\xx_n))^\top$ at the $n$ training points, we define the contribution $c_l$ of the $\ell_{th}$ layer to decrements in the training error along $f$ as:
\begin{align*}
    c_l = \yy^\top G^{(\ell)} \yy^\top.
\end{align*}
Finally, we normalize the contribution of the $\ell_{th}$ layer by the contribution of the last layer i.e $c_{L-1}$ to obtain the relative contributions $\frac{\yy^\top G^{(\ell)} \yy^\top}{\yy^\top G^{(L-1)} \yy^\top}$. We plot these relative contributions for different datasets and different functions $f$ in all the figures.

\section{Additional Results and Details for MNIST}

Here we include the results for settings 2 and 3 described in section 6.3  in Figures \ref{fig:pca} and \ref{fig:full} respectively. In all the three settings, we flatten the input into a vector of dimension $784$ and use four layer fully-connected ReLU networks having the same width i.e $784$. For the computation of the PCA and the gram matrices, we use $1000$ randomly sampled training points.

\begin{figure}
    \centering
    \includegraphics{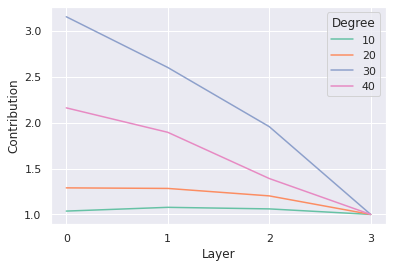}
    \caption{Layer wise contributions of different layers for sinusoids of different degrees along the first principal component under Relu activation and the MNIST dataset.}
    \label{fig:pca}
\end{figure}
\begin{figure}
    \centering
    \includegraphics{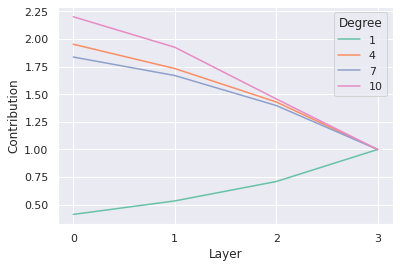}
    \caption{Layer wise contributions of different layers for different eigenvectors of the full Kernel under Relu activation and the MNIST dataset. Here degree denotes the rank of the magnitude of the eigenvalues when ordered from largest to smallest.}
    \label{fig:full}
\end{figure}
\section{Additional Results for CIFAR}

Using the same setup as the MNIST dataset, we plot the relative contributions for the CIFAR dataset under setting 2 i.e. by considering sinusoid functions of different degrees along the projection defined by the first principle component of CIFAR. We plot the results in Figure \ref{fig:cifar}, which again support the validity of our hypothesis in high dimensional datasets.

\begin{figure}[h]
    \centering
    \includegraphics{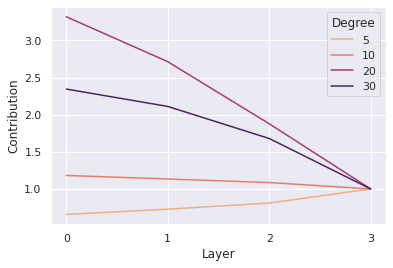}
    \caption{Relative contributions for the CIFAR dataset.}
    \label{fig:cifar}
\end{figure}

\section{Results for Tanh activation Function}

We provide results for the Tanh activation function under the same setting as Section 6.2 i.e spherical harmonics of different degrees for input dimension 10.

\begin{figure}
    \centering
    \includegraphics{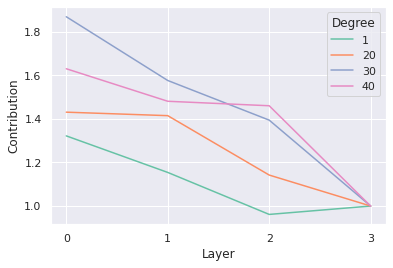}
    \caption{Layer wise contributions for spherical harmonics of different degrees under Tanh activation and input dimension 10.}
    \label{fig:tanh}
\end{figure}
From Figure \ref{fig:tanh}, we observe that the amplification of the contributions of initial layers for high degree functions is still present, while being much less pronounced than for ReLU. We believe this is due to the much faster decay of the gradient and the high frequency terms for Tanh, which prevents the gradient from effectively propagating high frequency signals to the initial layers.

\end{document}